\newcommand{\args}{\ensuremath{\mathcal{A}}}
\newcommand{\edges}{\ensuremath{\mathcal{E}}}
\newcommand{\weight}{\ensuremath{\operatorname{w}}}
\newcommand{\probDists}{\ensuremath{\mathcal{P}_\args}}
\newcommand{\attacker}{\ensuremath{\mathrm{Att}}}
\newcommand{\supporter}{\ensuremath{\mathrm{Sup}}}
\newcommand{\constraints}{\ensuremath{\mathcal{C}}}
\newcommand{\metahypotheses}{\ensuremath{\args_M}}
\newcommand{\subhypotheses}{\ensuremath{\args_S}}
\newcommand{\piecesofevidence}{\ensuremath{\args_E}}
\newcommand{\ievidence}{\ensuremath{\mathrm{E_{\mathrm{inc}}}}}
\newcommand{\eevidence}{\ensuremath{\mathrm{E_{\mathrm{ex}}}}}
\newcommand{\innocence}{\ensuremath{\mathrm{Innocence}}}
\newcommand{\devidence}{\ensuremath{\mathrm{E_{\mathrm{d}}}}}
\newcommand{\cevidence}{\ensuremath{\mathrm{E_{\mathrm{c}}}}}
\newcommand{\alibi}{\ensuremath{\mathrm{Alibi}}}
\newcommand{\ability}{\ensuremath{\mathrm{Ability}}}
\newcommand{\motive}{\ensuremath{\mathrm{Motive}}}
\newcommand{\opportunity}{\ensuremath{\mathrm{Opportunity}}}
\newcommand{\cameraf}{\ensuremath{\mathrm{Camera1}}}
\newcommand{\cameras}{\ensuremath{\mathrm{Camera2}}}
\newcommand{\camera}{\ensuremath{\mathrm{Camera}}}
\newcommand{\metahypothesesedges}{\ensuremath{\edges_M}}
\newcommand{\supportedges}{\ensuremath{\edges_S}}
\newcommand{\evidenceedges}{\ensuremath{\edges_E}}
\newcommand{\belief}{\ensuremath{\mathcal{B}}}
\newcommand{\ubelief}{\ensuremath{\overline{\belief}}}
\newcommand{\lbelief}{\ensuremath{\underline{\belief}}}
\newtheorem{proposition}{Proposition}
\theoremstyle{definition}
\newtheorem{definition}{Definition}
\newtheorem{example}{Example}
\author{%
	Inga Ibs$^1$\and
	Nico Potyka$^2$\\
	\affiliations
	$^1$Technical University of Darmstadt\\
	$^2$University of Stuttgart\\
	\emails
	inga.ibs@tu-darmstadt.de,
	nico.potyka@ipvs.uni-stuttgart.de
}
\title{Explainable Automated Reasoning in Law using Probabilistic Epistemic Argumentation }
\begin{document}
\maketitle

\begin{abstract}
 Applying automated reasoning tools for decision support and analysis in law 
 has the potential to make court decisions more transparent and objective. 
 Since there is
 often uncertainty about the accuracy and relevance of evidence,
 non-classical reasoning approaches are required. 
 Here, we investigate probabilistic epistemic argumentation as a tool for automated reasoning about legal cases. We introduce a general scheme to model legal cases as probabilistic epistemic argumentation problems,
explain how evidence can be
 modeled and sketch how explanations for legal decisions can be
 generated automatically. 
 Our framework is easily interpretable, can deal with cyclic
 structures and imprecise probabilities and guarantees polynomial-time
 probabilistic reasoning in the worst-case. 
\end{abstract}

\section{Introduction}

Legal reasoning problems can be addressed from different
perspectives.
From a lawyer's perspective, a trial may be best modeled as a
strategic game.
In a criminal trial, for example, the prosecutor may try
to convince the judge or jury of the defendant's guilt
while the defense attorney tries the opposite.
The problem is then to interpret the law and the evidence
in a way that maximizes the agent's utility. 
From this perspective, a legal reasoning problem is best modeled
using tools from decision and game theory \cite{hanson2014game,prakken1996dialectical,riveret2007success}. 

Our focus here is not on strategic considerations, 
but on the decision process that leads
to the final verdict in a legal process like a trial.
Given different pieces of evidence and beliefs about
their authenticity and relevance, how can we merge them
to make a
plausible and transparent decision?
Different automated reasoning tools have been applied in
order to answer similar questions, for example,
case-based reasoning \cite{bench2003model,mccarty1995implementation},
argumentation frameworks \cite{dung2010towards,prakken2013formalization}
or Bayesian networks \cite{fenton2013general}.
Since lawyers and  judges often struggle with the interpretation
of Bayesian networks, recent work also tries to explain Bayesian networks by argumentation tools \cite{vlek2016method}.

Here, we investigate the applicability of the probabilistic epistemic argumentation framework developed in \cite{hunter2013probabilistic,HunterPT2018Arxiv,HunterT16,thimm2012probabilistic}.
As opposed to classical argumentation approaches, this framework allows
expressing uncertainty by means of probability theory.
In particular, we can compute reasoning results in polynomial time
when we restrict the language \cite{potyka2019fragment}.
As it turns out, the resulting fragment is sufficiently expressive for our purpose,
so that our framework is computationally more efficient than
many other probabilistic reasoning approaches that suffer from
exponential runtime in the worst-case. At the same time, the graphical structure
is easily interpretable and allows to automatically generate
explanations for the final degrees of belief (probabilities) as we will explain later. 

While we can incorporate objective probabilities in our framework,
our probabilistic reasoning is best described as subjective in the sense that we basically merge beliefs about pieces of evidence
and hypotheses (probabilities that can be either objective or subjective). In order to define the beliefs about pieces of 
evidence from objective evidence and statistical information,
another approach like Bayesian networks or more general tools
from probability theory may be better suited. Our framework
can then be applied on top of these tools.
In this sense, our framework can be seen as a complement rather 
than a replacement of alternative approaches.

The remainder of this paper is structured as follows:
Section 2 explains the necessary basics. We will introduce
a basic legal argumentation framework in Section 3 and
discuss more sophisticated building blocks in Section 4.
We will discuss and illustrate the explainability capabilities of our approach 
as we proceed, but explain some more general ideas in Section 5.
Finally, we add some discussion about related work, the
pros and cons of our framework and future work in Sections 6 
and 7.

\section{Probabilistic Epistemic Argumentation Basics}

Our legal reasoning approach builds up on the probabilistic epistemic argumentation approach
developed in \cite{thimm2012probabilistic,hunter2013probabilistic,HunterT16,HunterPT2018Arxiv}.
In this approach, we assign degrees of belief in the form of probabilities to arguments using probability functions
over possible worlds. A possible world basically interprets every argument as either accepted
or rejected. In order to restrict to probability functions that respect prior beliefs and
the structure of the argumentation graph, different constraints can be defined. 
Afterwards, we can assign a probability interval to every argument based on these constraints.
We will restrict to a fragment of the constraint language here that allows polynomial-time
computations \cite{potyka2019fragment}.

Formally, we represent arguments and their relationships in a directed edge-weighted graph 
$(\args, \edges, \weight)$. $\args$ is a finite set of arguments,
$\edges \subseteq \args \times \args$ is a finite set of directed edges between the arguments and
$\weight: \edges \rightarrow \mathbb{Q}$ assigns a rational number to 
every edge. 
If there is an edge $(A,B) \in \edges$, we say that
\emph{$A$ attacks $B$} if $w((A,B)) < 0$ and \emph{$A$ supports $B$} if $w((A,B)) > 0$.
We let 
$\attacker(A) = \{B \in \args \mid (B,A) \in \edges, w((A,B)) < 0\}$ be the set of attackers of an argument A 
and
$\supporter(A) = \{B \in \args \mid (B,A) \in \edges, w((A,B)) > 0\}$ be the set of supporters.

A \emph{possible world} is a subset of arguments $\omega \subseteq \args$. Intuitively,
$\omega$ contains the arguments that are accepted in a particular state of the world. 
Beliefs about the true state of the world are modeled by rational-valued probability
functions $P: 2^\args \rightarrow [0,1]\cap \mathbb{Q}$ such that $\sum_{\omega \in 2^\args} P(\omega) = 1$.
The restriction to probabilities from the rational numbers is for computational reasons only. In practice, it does 
not really mean any loss of generality because
implementations usually use finite precision arithmetic.
We denote the set of all probability functions over $\args$ by $\probDists$.
The probability of an argument $A \in \args$ under $P$
is defined by adding the probabilities of all worlds in which $A$ is accepted, that is, 
$P(A) = \sum_{\omega \in 2^\args, A \in \omega} P(\omega)$. $P(A)$ can be understood as a degree of belief, 
where $P(A) = 1$ means complete acceptance and $P(A)=0$ means complete rejection.

The meaning of attack and support relationships can be defined by means of 
constraints in probabilistic epistemic argumentation.
For example, the \emph{Coherence} postulate in \cite{HunterT16} intuitively demands that 
the belief in an argument is bounded from above by the belief of its attackers.
Formally, a probability function $P$ respects \emph{Coherence}
iff $P(A) \leq 1 - P(B)$ for all $B \in \attacker(A)$.
A more general constraint language has recently been introduced in \cite{HunterPT2018Arxiv}.
Here, we will restrict to a fragment of this language that allows solving our reasoning
problems in polynomial time \cite{potyka2019fragment}. 
A \emph{linear atomic constraint} is an expression of the form 
$$c_0 + \sum_{i=1}^n c_i \cdot \pi(A_i) \leq d_0 + \sum_{i=1}^m d_i \cdot \pi(B_i),$$
where $A_i, B_i \in \args$, $c_i, d_i \in \mathbb{Q}$, $n,m \geq 0$ (the sums can be empty)
and $\pi$ is a syntactic symbol that can be read as
'the probability of'.
For example, the \emph{Coherence} condition above can be expressed by a linear atomic constraint
with $m=n=1$, $c_0=0$, $c_1=1$, $A_1 = A$, $d_0 = 1$, $d_1 = -1$ and $B_1=B$. However, we can also define more complex
constraints that take the beliefs of more than just two arguments into account.
Usually, the arguments that occur in a constraint are neighbors in the graph
and the coefficients $c_i, d_i$ will often be based on the weight of the edges between the arguments.
We will see many examples later.

A probability function $P$ \emph{satisfies} a linear atomic constraint iff
$c_0 + \sum_{i=1}^n c_i \cdot P(A_i) \leq d_0 + \sum_{i=1}^m d_i \cdot P(B_i)$.
$P$ satisfies a set of linear atomic constraints $\constraints$, denoted as $P \models \constraints$, iff it satisfies all constraints $c \in C$.
If this is the case, we call $\constraints$ \emph{satisfiable}.

We are interested in two reasoning problems here that have been introduced in 
\cite{HunterT16}.
First, the \emph{satisfiability problem} is, given a graph $(\args, \edges, \weight)$
and a set of constraints $\constraints$ over this graph, to decide if the constraints
are satisfiable. This basically allows us to check that our modelling assumptions are 
consistent.
Second,  the \emph{entailment problem} is, given a graph $(\args, \edges, \weight)$,
a set of satisfiable constraints $\constraints$ and an argument $A$, to compute lower and upper bounds
on the probability of $A$ based on the probability functions that satisfy the constraints.
For example, suppose we have $\args = \{A, B, C\}$, $\edges = \{(A,B), (B,C)\}$, $\weight((A,B)) = 1$, 
$\weight((B,C)) = -1$. We encode the meaning of the support relationship $(A,B)$ by
$w((A,B)) \cdot \pi(A) \leq \pi(B)$ (a supporter bounds the belief in the argument from below) and 
the meaning of the attack relationship $(B,C)$ by $\pi(C) \leq 1 + w((B,C)) \cdot P(B)$ (an attacker bounds the belief in the argument from above). Say, we also tend to accept $C$ and model this
by the constraint $0.5 \leq \pi(C)$. 
Then our constraints are satisfiable and the entailment
results are $P(A) \in [0, 0.5]$, $P(B) \in [0, 0.5]$, $P(C) \in [0.5, 1]$.
To understand the reasoning, let us consider the upper bound for $A$.
If we had $P(A) > 0.5$, we would also have $P(B) > 0.5$ because of the support 
constraint. But then, we would have $P(C) < 0.5$ because of the attack constraint. 
However, this would violate our constraint for $C$. Hence, we must have $P(A) \leq 0.5$.
In particular, if we would add the constraint $1 \leq \pi(A)$ (accept $A$), our
constraints would become unsatisfiable.
Both the satisfiability and the entailment problem can be automatically solved by 
linear programming techniques.
In general, the linear programs can become exponentially large.
However, both problems
can be solved in polynomial time when we restrict to linear atomic constraints \cite{potyka2019fragment}.

\section{Basic Legal Argumentation Framework}

Legal reasoning problems can occur in many forms and an attempt to capture all of them
at once would most probably result in a framework that is hardly more
concrete than a general abstract argumentation framework.
We will therefore focus on a particular scenario, where the innocence of a defendant
has to be decided.
Modeling a single case may not be sufficient to illustrate the general applicability
of probabilistic epistemic argumentation.
We will therefore try to define a reasoning framework that can be instantiated for different cases, while still being easily  comprehensible.
As with every formal model, there are some simplifying assumptions
about the nature of a trial. 
However, we think that our framework
is sufficient to illustrate how real cases can be modeled and
structured by means of probabilistic epistemic argumentation.
We will make some additional comments about this as we proceed.

Following \cite{fenton2013general}, we regard a legal case roughly as
a collection of \emph{hypotheses} and \emph{pieces of evidence}
that support the hypotheses. 
We model both as abstract arguments,
that is, as something that can be accepted or rejected to a certain degree
by a legal
decision maker like a judge, the jury or a lawyer. 
To begin with, we introduce
three meta hypotheses that we model by 
three arguments 
$\ievidence$ (the defendant should be declared guilty because of the inculpatory evidence),
$\eevidence$ (the defendant should be declared innocent because of the exculpatory evidence) and
$\innocence$ (the defendant is innocent).
We regard $\innocence$ as the ultimate hypothesis
that is to be decided within the trial.
In general, it may be necessary to consider several
ultimate hypotheses that may correspond to different qualitative
degrees of legal liability (e.g. intent vs. accident vs. innocent).
If necessary, these can be incorporated by adding additional ultimate
hypotheses in an analogous way.
$\ievidence$ and $\eevidence$ are supposed to merge 
hypotheses and pieces of evidence that speak
against ($\ievidence$) or for ($\eevidence$)
the defendant's innocence
as illustrated in Figure \ref{fig:metagraph}.
\begin{figure}[t]
	\centering
		\includegraphics[width=0.45\textwidth]{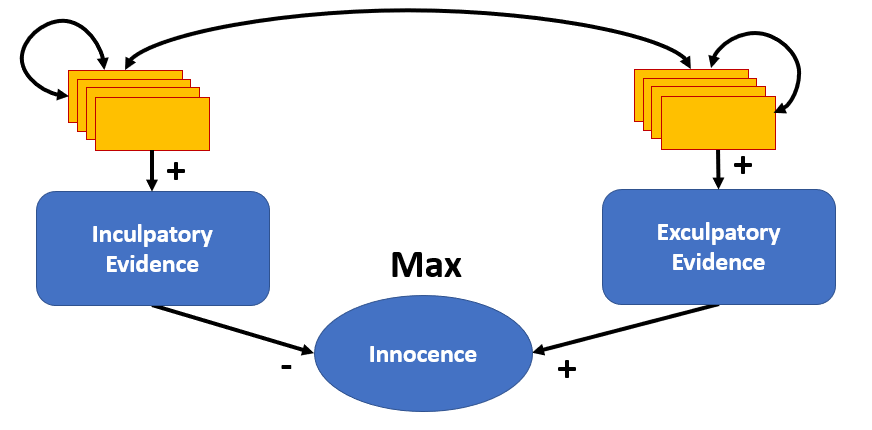}
	\caption{Meta-Graph for our Legal Reasoning Framework.}
	\label{fig:metagraph}
\end{figure}
Support relationships are indicated by a plus and attack relationships by a minus sign.
There can also be attack and support relationships between
pieces of evidence and additional hypotheses.

Intuitively, as our belief in $\ievidence$ increases, our belief in 
$\innocence$ should decrease.
As our belief in $\eevidence$ increases,
our belief in $\innocence$ should increase.
From a classical perspective, accepting $\ievidence$, 
should result in rejecting $\innocence$ and 
accepting $\eevidence$, 
should result in accepting $\innocence$.
In particular, we should not accept 
$\eevidence$ and $\ievidence$ at the same time.
Of course, in general, both the inculpatory 
evidence and the exculpatory evidence can be convincing to a certain degree. 
Probabilities are one natural way to capture this uncertainty.
Intuitively, our basic framework is based on the 
following assumptions that
we will make precise in the subsequent definition.
\begin{description}
    \item[Inculpatory Evidence (IE):] The belief in $\innocence$ is bounded
    from above by the belief in $\ievidence$.
    \item[Exculpatory Evidence (EE):] The belief in $\innocence$ is bounded
    from below by the belief in $\eevidence$.
    \item[Supporting Evidence (SE):] The belief in $\ievidence$ and $\eevidence$ is bounded from
    below by the belief in their supporting pieces of
    evidence.
    \item[Presumption of Innocence (PI):] The belief in $\innocence$ 
    is the maximum belief that is consistent with all assumptions.
\end{description}
The following definition gives a more formal description
of our framework. Our four main assumptions
are formalized in items 4 and 5. 
\begin{definition}[Basic Legal Argumentation Framework (BLAF)]
\label{def_blaf}
A BLAF is a quadruple $(\args, \edges, \weight, \constraints)$,
where $\args$ is a finite set of arguments,
$\edges$ is a finite set of directed edges between the arguments,
$\weight: \edges \rightarrow \mathbb{Q}$ is a weighting function 
and $\constraints$ is a set of linear atomic constraints over $\args$
such that:
\begin{enumerate}
    \item $\args = \metahypotheses \uplus \subhypotheses \uplus \piecesofevidence$
     is partitioned into a set of \emph{meta-hypotheses}
    $\metahypotheses = \{\innocence, \ievidence, \eevidence\},$
     a set of \emph{sub-hypotheses} $\subhypotheses$ 
     and a set of \emph{pieces of evidence} $\piecesofevidence$. 
    \item $\edges = \metahypothesesedges \uplus \supportedges \uplus \evidenceedges$ 
    is partitioned into a set of \emph{meta edges}
    $\metahypothesesedges = \{
    (\ievidence, \innocence),
    (\eevidence, \innocence)\}$, a set of \emph{support edges} 
    $\supportedges \subseteq (\subhypotheses \cup \piecesofevidence) \times \{\ievidence, \eevidence\}$
    and a set of \emph{evidential edges} 
    $\evidenceedges \subseteq (\subhypotheses \cup \piecesofevidence) \times (\subhypotheses \cup \piecesofevidence)$.
    \item $\weight((\ievidence, \innocence))=-1$ and
    $\weight((\eevidence, \innocence))=1$.
     Furthermore, $0 \leq \weight(e) \leq 1$ for all $e \in \supportedges$
    \item $\constraints$ contains at least the following constraints:
    \begin{description} 
        \item[IE:] $\pi(\innocence) \leq 1 + \weight((\ievidence, \innocence)) \cdot \pi(\ievidence)$,
        \item[EE:] $ \weight((\eevidence, \innocence)) \cdot \pi(\eevidence) \leq \pi(\innocence)$,
        \item[SE:] $\weight((E,H)) \cdot \pi(E) \leq \pi(H)$ for all $(E,H) \in \supportedges$.
    \end{description}
 \item For all $A \in \args$, we call $\lbelief(A) = \min_{P \models \constraints} P(A)$
 the \emph{lower belief in $A$} and $\ubelief(A) = \max_{P \models \constraints} P(A)$
 the \emph{upper belief in $A$}. The \emph{belief in $\innocence$} in is defined as
 \begin{equation*}
     PI: \belief(\innocence) = \ubelief(A). 
 \end{equation*}
 and the \emph{belief} in the remaining $A \in \args \setminus \{\innocence\}$ is the interval
 $\belief(A) = [\lbelief(A), \ubelief(A)]$. 
\end{enumerate}
\end{definition}
Items 1-3 basically give a more precise description of the graph illustrated in 
Figure \ref{fig:metagraph}. Item 4 encodes our first three main assumptions as linear
atomic constraints. 
The general form of our basic constraints is 
 $\pi(B) \leq 1 + w((A,B)) \cdot P(A)$ for attack relations $(A,B)$
 (note that for $w((A,B)) = -1$, this is just the coherence constraint from \cite{HunterT16})
and $w((A,B)) \cdot \pi(A) \leq \pi(B)$ for support relations.
Intuitively, attacker bound beliefs from above and supporter bound beliefs
from below.
Item 5 defines lower and upper beliefs in arguments as the minimal
and maximal probabilities that are consistent with our constraints.
Following our fourth assumption (presumption of innocence), the belief in $\innocence$ is defined 
by the upper bound. The beliefs in the remaining arguments is the interval defined by the lower
and upper bound.
The following proposition summarizes some
consequences of our basic assumptions.
\begin{proposition}
\label{prop_basic_beliefs}
For every BLAF $(\args, \edges, \weight, \constraints)$,  we have
\begin{enumerate}
    \item $\ubelief(\ievidence) \leq 1 - \lbelief(\eevidence)$ and $\ubelief(\eevidence) \leq 1 - \lbelief(\ievidence)$.
    \item For all support edges $(a, E) \in \supportedges$, we have 
    \begin{itemize}
        \item $\ubelief(\eevidence) \leq 1 - \weight((a,\ievidence)) \cdot \lbelief(a)$ if $E = \ievidence$,
        \item $\ubelief(\ievidence) \leq 1 - \weight((a,\eevidence)) \cdot \lbelief(a)$ if $E = \eevidence$.
    \end{itemize}
\end{enumerate}
\end{proposition}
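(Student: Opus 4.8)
The plan is to reduce everything to the three defining constraints IE, EE, and SE, evaluated at a single probability function witnessing the relevant upper belief, and then to relate the resulting quantities back to the lower beliefs via the extremality of $\lbelief$. First I would substitute the fixed edge weights $\weight((\ievidence,\innocence)) = -1$ and $\weight((\eevidence,\innocence)) = 1$ from item~3, so that IE reads $\pi(\innocence) \leq 1 - \pi(\ievidence)$ and EE reads $\pi(\eevidence) \leq \pi(\innocence)$. Chaining these two immediately gives the ``diagonal'' inequality $P(\eevidence) \leq 1 - P(\ievidence)$ for every $P \models \constraints$, which is the engine behind both parts. Throughout I use that the feasible set is a compact polytope and the objectives are linear, so the extrema defining $\lbelief$ and $\ubelief$ are attained.

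For Part~1, I would pick a maximizer $P^*$ with $P^*(\ievidence) = \ubelief(\ievidence)$. Applying the diagonal inequality to $P^*$ yields $\ubelief(\ievidence) \leq 1 - P^*(\eevidence)$, and since $\lbelief(\eevidence) = \min_{P \models \constraints} P(\eevidence) \leq P^*(\eevidence)$ by definition, replacing $P^*(\eevidence)$ by the smaller $\lbelief(\eevidence)$ only weakens the right-hand side, giving $\ubelief(\ievidence) \leq 1 - \lbelief(\eevidence)$. The symmetric bound $\ubelief(\eevidence) \leq 1 - \lbelief(\ievidence)$ follows the same way, now starting from a maximizer of $P(\eevidence)$ and reading the chain $P(\eevidence) \leq P(\innocence) \leq 1 - P(\ievidence)$ in the opposite order before substituting $\lbelief(\ievidence)$ for the larger value $P(\ievidence)$.

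For Part~2, consider a support edge $(a,\ievidence) \in \supportedges$, fix a maximizer $Q^*$ of $P(\eevidence)$, and apply the diagonal inequality to get $\ubelief(\eevidence) \leq 1 - Q^*(\ievidence)$; then SE for this edge gives $Q^*(\ievidence) \geq \weight((a,\ievidence)) \cdot Q^*(a)$, so that $\ubelief(\eevidence) \leq 1 - \weight((a,\ievidence)) \cdot Q^*(a)$. The one step requiring care---and the main obstacle---is the final substitution of $Q^*(a)$ by $\lbelief(a)$: since $Q^*(a) \geq \lbelief(a)$, replacing it preserves the $\leq$ only because the coefficient $\weight((a,\ievidence))$ is nonnegative, which is exactly guaranteed by the condition $0 \leq \weight(e) \leq 1$ on support edges in item~3. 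With that observation the bound $\ubelief(\eevidence) \leq 1 - \weight((a,\ievidence)) \cdot \lbelief(a)$ follows, and the case $E = \eevidence$ is verbatim symmetric with $\ievidence$ and $\eevidence$ interchanged.
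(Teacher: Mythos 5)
Your proof is correct and follows essentially the same route as the paper: both chain the IE and EE constraints into the inequality $P(\eevidence) \leq 1 - P(\ievidence)$ for every $P \models \constraints$, combine it with SE for part~2, and then pass to $\lbelief$ and $\ubelief$ by extremality (the paper quantifies over an arbitrary satisfying $P$ where you instantiate an explicit maximizer, a purely stylistic difference). Your explicit remark that the final substitution of $\lbelief(a)$ needs $\weight((a,E)) \geq 0$ from item~3 is a nice touch, making precise what the paper compresses into ``along with the conditions on $\weight$.''
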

\begin{proof}
1. We prove only the first statement, the second one follows analogously. 
 Consider an arbitrary $P \in \probDists$ that satisfies $\constraints$. 
  Then $P(\ievidence) \leq P(\innocence) 
    \leq 1 - P(\eevidence) \leq 1 - \lbelief(\eevidence)$. The first inequality
    follows from EE and the second from IE (Def. \ref{def_blaf}, item 4) along with the 
    conditions on $\weight$ (Def. \ref{def_blaf}, item 3). The third inequality follows
    because $\lbelief(\eevidence) \leq P(\eevidence)$ by definition of $\lbelief$.

2. Again, we prove only the first statement.
  Note that SE (Def. \ref{def_blaf}, item 4) implies
  $P(\ievidence) \geq \weight((a,\ievidence)) \cdot P(a)$
  for all $P \in \probDists$ that satisfy $\constraints$.
  Therefore, $P(\eevidence) \leq 1 - P(\ievidence)
   \leq 1 - \weight((a,\ievidence)) \cdot P(a) \leq 1 - \weight((a,\ievidence)) \cdot \lbelief(a) $, where the first and third inequalities can be derived like in 1.
\end{proof}
Intuitively, item 1 says that our upper belief that the defendant should be declared guilty
because of the inculpatory evidence is bounded from above by our lower belief that the defendant
should be declared innocent because of the exculpatory evidence and vice versa.
By rearranging the equations, we can see that the lower belief in $\ievidence$
is also bounded from above by the upper belief in $\eevidence$ and vice versa.
Item 2 explains that every argument $a$ that directly contributes to inculpatory (exculpatory) evidence $E$ 
gives an upper bound for the belief in $\eevidence$ ($\ievidence$) that is based on our lower 
belief $\lbelief(a)$ and the relevance $\weight((a,E))$ of this argument.
In a similar way, we could bound the beliefs in contributors to $\ievidence$ by the belief in
contributors to $\eevidence$ by taking their respective weights into account. 
However, the general description becomes more and more difficult
to comprehend. Therefore, we just illustrate the interactions by means of a simple example.
\begin{example}
\label{example_blaf}
Let us consider a simple case of hit-and-run driving. The defendant is accused of having struck a car while parking at a shopping center. The plaintiff witnessed the accident from afar and denoted the registration number from the licence plate when the car left ($T_1$). The defendant denies the crime and testified that he was at home with his girlfriend at the time of the offence ($T_2$). His girlfriend confirmed his alibi ($T_3$). However, a security camera at the parking space recorded a person that bears strong resemblance to the defendant at the time of the crime ($E_1$). We consider a simple formalization shown in Figure \ref{fig:example_blaf}.
\begin{figure}[tb]
	\centering
		\includegraphics[width=0.42\textwidth]{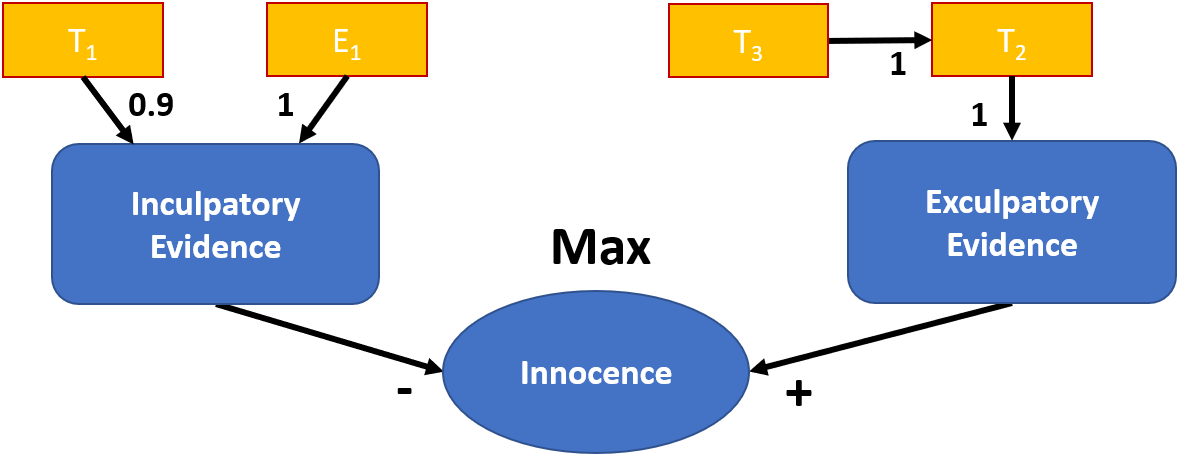}
	\caption{BLAF for Example \ref{example_blaf}.}
	\label{fig:example_blaf}
\end{figure}
We designed the graph in a way that allows illustrating the interactions in our framework. 
One may also want to regard $T_3$ as a supporter of exculpatory evidence and consider attack relationships between $E_1$ and
$T_1$ and $T_3$. We do not introduce such edges because we want to illustrate the indirect interactions
between arguments. In this example, we may weigh all edges with $1$ and control the uncertainty only about
the degrees of belief. However, we assign a weight of $0.9$ to the edge from $T_1$ in order to illustrate the effect
of the weight. This may capture the uncertainty that the plaintiff may have written down the wrong
registration number, for example.
The probability for $T_1$, $T_2$ and $T_3$ is our degree of belief that the corresponding testimonies are true. The probability of $E_1$ is our degree of belief that the camera does indeed show the 
defendant and not just another person. 
Without additional assumptions, we can only derive that our degree of belief in $\innocence$ is
$1$ (presumption of innocence) as shown in the second column ($\belief_1$) of Table \ref{tab:example_blaf}.
\def\arraystretch{1.2}
\begin{table}
	\centering
	\begin{tabular}{l>{\raggedleft}p{0.9cm}>{\raggedleft}p{1.2cm}>{\raggedleft \arraybackslash}p{1.2cm}}
		$\args$ & $\belief_1$ & $\belief_2$ & $\belief_3$\\ 
		\hline
		$\innocence$ & 1     & 1     & 0.1  \\
		$\ievidence$ & [0, 1] & [0, 0.3] & [0.9, 1] \\
		$\eevidence$ & [0, 1] & [0.7, 1] & [0, 0.1] \\
		$T_1$        & [0, 1] & [0, 0.33] & [0, 1] \\
		$T_2$        & [0, 1] & [0.7, 1] & [0, 0.1] \\
		$T_3$        & [0, 1] & [\textbf{0.7}, 1] & [0, 0.1] \\
		$E_1$        & [0, 1] & [0, 0.3] & [\textbf{0.9}, 1] 
	\end{tabular}
	\caption{Beliefs under additional assumptions  for Example \ref{example_blaf} (rounded to two digits). Directly constrained beliefs are highlighted in bold.}
	\label{tab:example_blaf}
\end{table}
We could now start adding assumptions and looking at the consequences. For example, let us assume
that the statement of the defendant's girlfriend was very convincing. We could incorporate this by 
adding the constraint $\pi(T_3) \geq 0.7$. The consequences are shown in the third column ($\belief_2$) of Table \ref{tab:example_blaf}. However, if the person on the camera bears strong resemblance to the 
defendant, we may find that the upper belief in $E_1$ is too low. This means that our assumption
is too strong and needs to be revised. Let us just delete the constraint $\pi(T_3) \geq 0.7$ and instead impose a
constraint on $E_1$. Let us assume that there is hardly any doubt that the camera shows the 
defendant. We could incorporate this by adding the constraint $\pi(E_1) \geq 0.9$.
The consequences are shown in the fourth column ($\belief_3$) of Table \ref{tab:example_blaf}.
\end{example}
The choice of probabilities (degrees of belief), weights (relevance) and additional attack or support relations is, of course, subjective. However, arguably, every court decision is subjective in that
the decision maker(s) have to weigh the plausibility and the relevance of the evidence in one way
or another. By making these assumptions explicit in a formal framework, the decision process can become more transparent. Furthermore, by computing probabilities while adding assumptions, possible inconsistencies can be detected and resolved early. Since we restrict to 
linear atomic constraints, computing probabilities can be done within a second even when there are thousands of arguments.

Let us note that our framework also allows defining some simple rules that allow deriving
explanations for the verdict automatically.
For example, the belief in $\innocence$ can be explained directly from the beliefs in
$\ievidence$ and $\eevidence$.
If both $\lbelief(\ievidence) \leq 0.5$ and $\lbelief(\eevidence) \leq 0.5$. our system may report
that the defendant is found innocent because of lack of evidence.
If $\lbelief(\eevidence) > 0.5$, it could report that the defendant is found innocent because
the exculpatory evidence is more plausible than the inculpatory evidence (recall from Proposition
\ref{prop_basic_beliefs} that $\ubelief(\ievidence) \leq 1 - \lbelief(\eevidence)$).
Finally, if $\lbelief(\ievidence)$ is sufficiently large, it could report that the defendant
is found guilty because of the inculpatory evidence.
The belief in $\ievidence$ and $\eevidence$ can then be further explained 
based on the belief in supporting hypotheses and pieces of evidence.
The influence of supporting arguments can be measured by their lower belief bounds and their weight.
To illustrate this, consider again Table \ref{tab:example_blaf}.
For $\belief_1$, the system could report that the defendant is innocent because of lack of 
convincing evidence,
while, for $\belief_2$, it can explain that there is convincing exculpatory evidence. If desired, it
can then further report $T_2$ as the direct explanation and, going backwards, $T_3$ as an additional
explanation.
Similarly, for $\belief_3$, the system could report that the defendant is probably not innocent because
of the inculpatory evidence. Again, the system could give further explanations by going backwards in
the graph. We will discuss the idea in more general form in Section 5.

\section{Adding Additional Structure to BLAFs}

BLAFs can capture a wide variety of cases. However, it is often desirable to add additional structure that captures recurring patterns in legal reasoning.
From a usability perspective, this makes the graph more easily comprehensible and allows modeling different cases in a consistent and standardized way.
From an automated reasoning perspective, it allows adding additional general rules that can automatically derive explanations for decisions.

Two natural subsets of inculpatory evidence are direct (\devidence) and circumstantial (\cevidence) inculpatory evidence. While direct evidence provides direct inculpatory evidence, circumstantial evidence involves indirect evidence that requires multiple inferential steps \cite{fenton2013general}. 
For example, a camera that recorded the defendant while committing the crime can be seen as direct
evidence, while a camera that recorded the defendant close to the crime scene like in Example \ref{example_blaf} can be seen as a piece of circumstantial evidence.
Two prominent categories of circumstantial evidence are \emph{motive}
(the defendant had a reason to commit the crime) and \emph{opportunity}
(the defendant had the opportunity to commit the crime).
Figure \ref{fig:blaf_advanced} shows a refined BLAF.
As indicated by the join of their support edges, the beliefs
in pieces of circumstantial evidence are merged and not considered independently.
Only if both a motive and the opportunity 
(and perhaps some additional conditions)
were present, the defendant should be found guilty.
In contrast, pieces of direct evidence are standalone
arguments for the defendant's guilt.

 Two recurring patterns of exculpatory evidence are \emph{alibi} and \emph{ability}. While an alibi indicates that the 
defendant has not been at the crime scene at the time of the crime, \emph{ability} can contain
pieces of evidence that indicate that the defendant could not have committed the crime, for example,
due to lack of physical strength. Figure \ref{fig:blaf_advanced} shows an extended BLAF with six additional meta-hypotheses.
\begin{figure}[tb]
	\centering
		\includegraphics[width=0.43\textwidth]{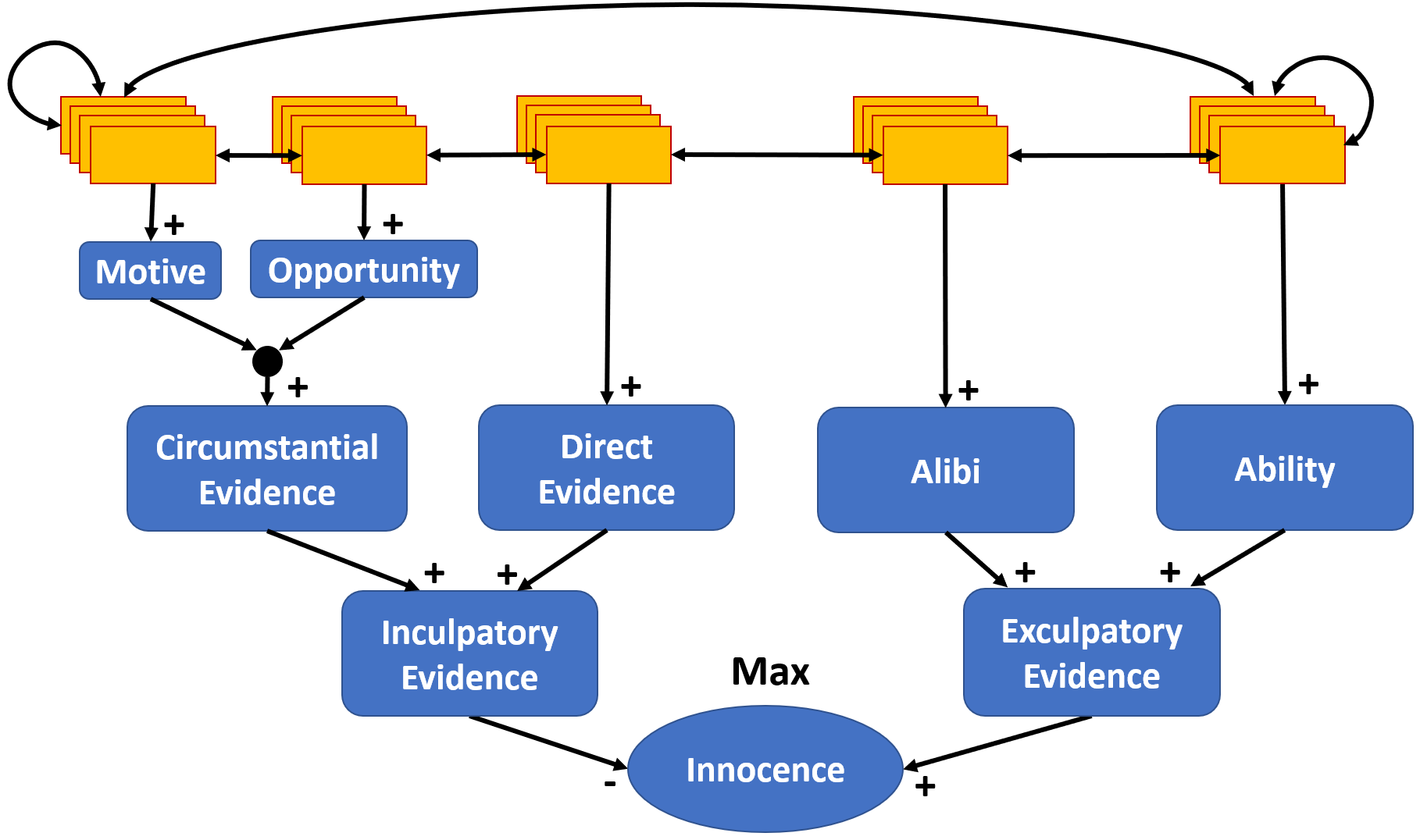}
	\caption{Refined BLAF with additional meta-hypotheses.}
	\label{fig:blaf_advanced}
\end{figure}
As before, we allow edges between all pieces of evidence and subhypotheses, but do not draw all possible direct connections in order to keep the graph comprehensible.
The meaning of the support edges pointing to inculpatory and exculpatory evidence is already defined
by SE in Definition \ref{def_blaf}, item 4. That is the corresponding support relations $(A,B)$
 are associated with the constraint $w((A,B)) \cdot \pi(A) \leq \pi(B)$.
 This constraint could also be naturally used for the evidential edges that point to direct evidence, alibi and ability.
 However, the circumstantial evidence patterns motive and opportunity should not act independently, but complement each other. Neither a motive, nor the opportunity alone, are a good reason to 
 find the defendant guilty. However, if both a good motive and the opportunity are present, this
 may be a good reason. We say that both items together provide \emph{collective support} for the guilt of the defendant. To formalize \emph{collective support}, we can consider a constraint 
 $\weight((\motive, \cevidence)) \cdot \pi(\motive) + 
   \weight((\opportunity, \cevidence)) \cdot \pi(\opportunity)
   \leq \pi(\cevidence)$
such that $\weight((\motive, \cevidence)) + \weight((\opportunity, \cevidence)) \leq 1$.
For example, we could set $\weight((\motive, \cevidence)) = \weight((\opportunity, \cevidence)) = 0.4$.
Then the presence of a strong motive or the opportunity alone cannot decrease the belief in the defendant's
innocent by more than $0.4$ and both together cannot decrease the belief by more than $0.8$.
Opportunity is indeed considered a necessary requirement
for the defendant's guilt in the legal reasoning literature
and motive is, at least, widely accepted as such \cite{fenton2013general}.
Collective support is an interesting pattern in general, so that 
we give a more general definition here.
Given arguments $A_1, \dots, A_n$ (pieces of evidence or sub-hypotheses) that support another argument $B$ 
such that $\sum_{i=1}^n \weight((A_i,B)) \leq 1$, the \emph{collective support constraint} is defined as 
   \begin{description} 
        \item[CS:] $\sum_{i=1}^n \weight((A_i,B)) \cdot \pi(A_i) \leq \pi(B)$.
    \end{description}
The following example illustrates how the additional structure
can be applied.
\begin{example}
\label{example_blaf_extended}
Let us consider a simple robbery case.
The defendant $D$ is accused of having robbed 
the victim $V$. The extended BLAF is shown in Figure \ref{fig:example_blaf_extended}.
\begin{figure}[tb]
	\centering
		\includegraphics[width=0.45\textwidth]{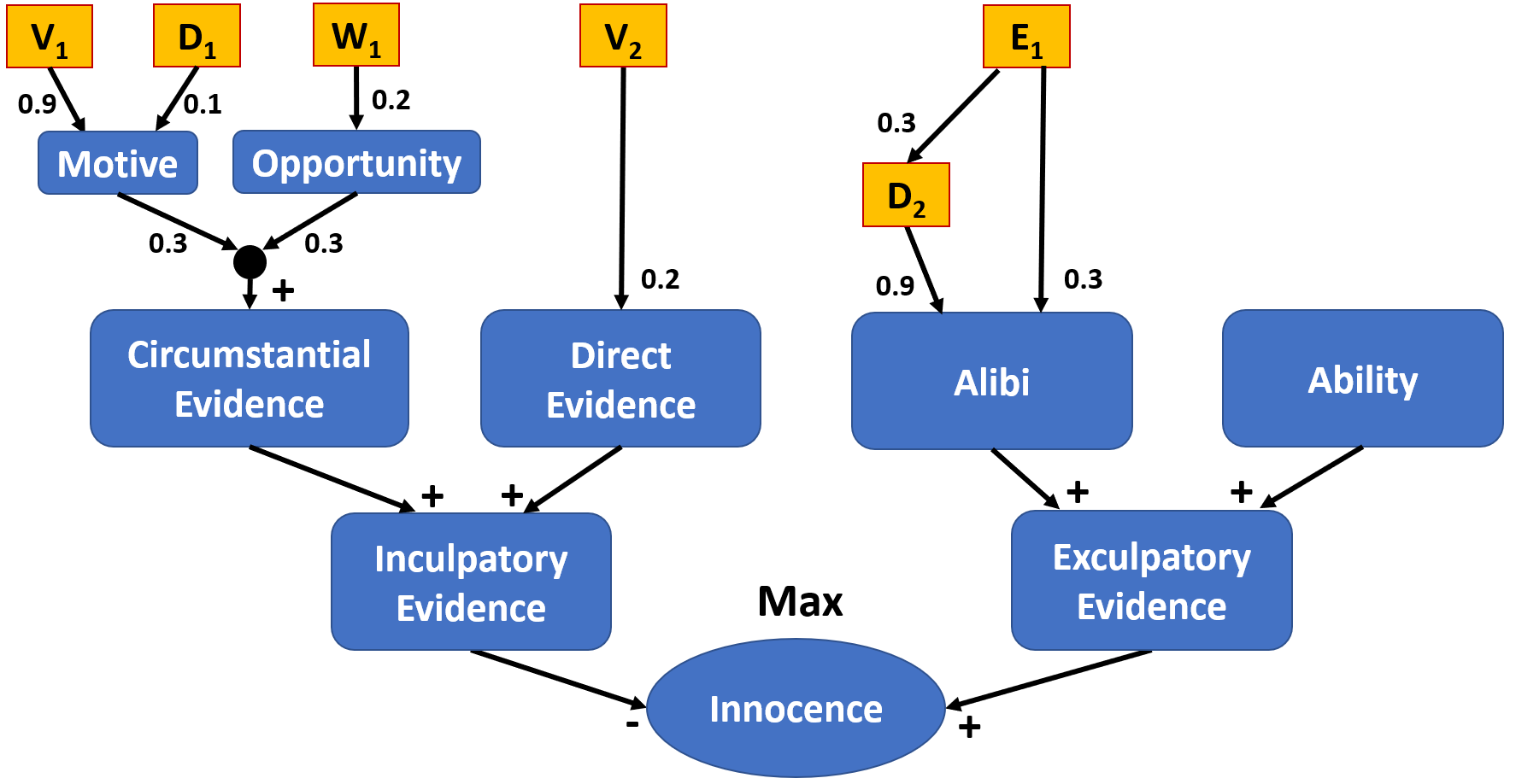}
	\caption{Extended BLAF for Example \ref{example_blaf_extended}.}
	\label{fig:example_blaf_extended}
\end{figure}
Before the crime, $D$ and $V$ met in a bar and had a fight 
about money that $V$ owed $D$.  
$V$ testified that $D$ threatened to get the money one way or another ($V_1$). 
$D$ acknowledged the fight, but denied the threat ($D_1$).
While D's testimony still contains a motive for the crime,
it is now significantly weaker. This can be reflected in the weights.
We could consider a more fine-grained view distinguishing the fight
and the threat and add an attack between the contradicting statements,
but in order to keep things simple, we refrain from doing so.
$V$ testified that he got robbed at 23:30 by a masked person
and that he recognized the defendant based on his voice and 
stature ($V_2$). This can be seen as direct evidence for the crime, but since the accused is of average stature, it should have only a small weight. 
A waiter working at the bar testified that the defendant left the
bar at about 23:00 ($W_1$). This may have allowed the defendant 
hypothetically to commit the crime, but he could have went anywhere, so the weight
should be again low.
The defendant testified that he went to the movie theater 
and watched a movie that started at 23:15 ($D_2$).
If true, this is a strong alibi and should therefore have a large weight.
An employee at the movie theater testified that the defendant
is a frequent guest and that he recalled him buying a drink ($E_1$).
However, he did not recall the exact time. So the alibi is somewhat weak
and should not have too much weight.
We weigh $\motive$ and $\opportunity$ equally  with
$\weight((\motive, \innocence)) = \weight((\opportunity, \innocence)) = 0.3$. 
The influence of the belief in motive and opportunity
on circumstantial evidence is defined by the collective support constraint that we described above. 
All evidential edges $(E,A)$ that originate from a piece of evidence $E$ are associated with the constraint
$w((E,A)) \cdot \pi(E) \leq \pi(A)$.
Figure \ref{fig:example_blaf_extended} shows
the final graph structure and edge weights.

\def\arraystretch{1.2}
\begin{table}

	\centering
	\resizebox{\linewidth}{!}{
	\begin{tabular} {l>{\raggedleft}p{0.09\linewidth}>{\raggedleft}p{0.175\linewidth}>{\raggedleft}p{0.175\linewidth}>{\raggedleft\arraybackslash}p{0.175\linewidth}}
		$\args$ & Basic & $W1$, $E1$ & $W1$, $E1$, $D1$ & $W1$, $E1$, $D1$, $V2$  \\  
		\hline
		$\innocence$    & [0, 1]    & 0.94      & 0.91 & 0.8 \\
		$\ievidence$    & [0, 1]    & [0.06, 0.7]  & [0.09, 0.7] & [0.2, 7] \\
		$\eevidence$    & [0, 1]    & [0.3, 0.94]  & [0.3, 0.91] & [0.3, 0.8] \\
		$\cevidence$    & [0, 1]    & [0.06, 0.7]  & [0.09, 0.7] & [0.09, 0.7]\\
	    $\devidence$    & [0, 1]    & [0, 0.7]  & [0, 0.7] & [0.2, 0.7]\\
        $\alibi$        & [0, 1]    & [0.3, 0.94] & [0.3, 0.91] &[0.3, 0.8]\\
        $\ability$      & [0, 1]    & [0, 0.94] & [0, 0.91] & [0, 0.8]\\
		$\motive$       & [0, 1]    & [0, 1]    & [0.1, 1] & [0.1, 1]\\
		$\opportunity$  & [0, 1]    & [0.2, 1]  & [0.2, 1] & [0.2, 1]\\
		$V1$            & [0, 1]    & [0, 1]    & [0, 1] & [0, 1]\\
		$V2$            & [0, 1]    & [0, 1]    & [0, 1] & \textbf{1}\\
		$D1$            & [0, 1]    & [0, 1]    & \textbf{1} & \textbf{1} \\
		$D2$            & [0, 1]    & [0.3, 1]  & [0.3, 1] & [0.3, 0.89] \\
		$W1$            & [0, 1]    & \textbf{1} & \textbf{1}& \textbf{1}\\
		$E1$            & [0, 1]    & \textbf{1}& \textbf{1}  &\textbf{1}
	\end{tabular}}
	\caption{Belief in $\innocence$ and entailment results
	under additional assumptions for Example \ref{example_blaf_extended} (rounded to two digits). Directly constrained beliefs are highlighted in bold.}
	\label{tab:example_blaf_extended}
\end{table}
Having defined the structure of the graph and the meaning of the edges, we can start to assign beliefs to pieces of evidence. 
Again, without making any assumptions about the beliefs, we can only infer that the degree of belief in $\innocence$  is 1. This is shown in the second column of Table \ref{tab:example_blaf_extended}. 
To begin with, we assume that the testimonies given by the cinema employee and the waiter of the bar are true ($\pi(E1) = 1, \pi(W1) = 1$). The  third column of Table \ref{tab:example_blaf_extended} shows the consequences of
these assumptions. 
We can see, for example, that the alibi $E1$ provides a lower bound for the belief in the exculpatory evidence and thus an upper bound for the beliefs in the inculpatory evidence and the related hypotheses. 
It seems also safe to assume that the defendant did not lie about his participation in the fight, so we the constraint $\pi(D1) = 1$ next. The fourth column in Table \ref{tab:example_blaf_extended} shows the resulting belief intervals. The new support for motive adds to the support of the circumstantial evidence and the lower bound on the belief in the inculpatory evidence is raised. This lowers the belief in the innocence of the accused slightly. Note again that it also decreases the upper bound on the belief in exculpatory evidence indirectly.
Finally, let us assume that the defendant does not lie about having 
recognized the defendant ($\pi(V2) = 1$) (recall that the uncertainty about
the recognition reliability is incorporated in the edge weight).
The fifth column in Table \ref{tab:example_blaf_extended} shows the new
beliefs. We can see that the belief in the defendant's innocence decreases
significantly. If we notice that a larger or smaller change is more plausible,
we could take account of this by adapting the edge weight. 
In this way, legal cases can be analyzed in a systematic way and the
plausibility of assumptions can be checked on the fly by looking at their
ramifications.
\end{example}

In addition to the previously introduced additional categories of meta-hypotheses, another recurring pattern in legal cases are mutually
dependent pieces of evidence. One way to model this in our framework,
is to define a meta-argument that is influenced by the dependent pieces of evidence. The collective support constraint CS is well suited to capture this relationship accurately. We illustrate this with an example from \cite[pp.82-84]{fenton2013general}.
\begin{example}
Let us assume that a person was recorded by two video cameras from
different perspectives at a crime scene. If the person is the defendant,
the defendant should resemble the person on both images.
In the BLAF, we can incorporate the two camera observations as pieces of evidence $\cameraf, \cameras$ supporting a meta-hypothesis $\camera$
that says that the defendant was at the crime scene because of camera evidence. Note that if we use the SE constraint for the evidential edges
from $\cameraf, \cameras$, each of the two cameras would independently determine a lower bound for $\camera$ which seems to strong in this example. 
Instead, we can use the CS constraint that we already used to capture the relationship between opportunity and motive. 
In this example, the CS constraint becomes $\weight((\cameraf, \camera)) \cdot \pi(\cameraf) + \weight((\cameras, \camera)) \cdot \pi(\cameras)\leq \pi(\camera)$,
where $\weight((\cameraf, \camera)) + \weight((\cameras, \camera)) \leq 1$.
For example both camera weights could be set to $\weight((\cameraf, \camera)) = \weight((\cameras, \camera)) = 0.5$ to give equal relevance to both.
Then, if the person resembles the defendant only from one perspective, say we have $\pi(\cameraf)=1$ and $\pi(\cameras)=0$, the induced lower bound on the belief in $\camera$ will be only $0.5$. Only if the belief in
both cameras is larger than $0.5$, the lower bound can be larger than $0.5$. For example, if we have $\pi(\cameraf)=0.7$ and $\pi(\cameras)=0.9$, the induced lower
bound is $0.8$.
\end{example}

\section{Automated Explanation Generation}
\label{sec:explanation}

As we already illustrated at the end of Section 3,
the structure of our framework allows generating
explanations for decisions automatically. 
In general, explaining the meta-hypotheses $\innocence, \ievidence$ and $\eevidence$ is 
easier than explaining the beliefs in other arguments because of their restricted
form.

Note first that the only direct neighbors of $\innocence$ are 
$\ievidence$ and $\eevidence$ and we know that
$\ievidence$ is an attacker and $\eevidence$ is a supporter.
Therefore, we can basically distinguish three cases 
that we already described at the end of Section 3.
\begin{enumerate}
\item $\lbelief(\ievidence) \leq T$ and $\lbelief(\eevidence) \leq 0.5$: The defendant is found
innocent due to lack of evidence.
\item $\lbelief(\eevidence) > 0.5$: the defendant is found innocent because
the exculpatory evidence is more plausible than the inculpatory evidence.
\item $\lbelief(\ievidence) > T$: the defendant
is found guilty because of the inculpatory evidence.
\end{enumerate}
Here, $T$ is a threshold that should usually be chosen from the open interval $(0.5, 1)$. $0.5$ is sometimes 
regarded as the acceptance threshold, but in a legal setting,
it may be more appropriate to choose a larger threshold
like $T=0.75$.

After having received a high-level explanation
of the verdict, the user may be interested in more
details and ask for reasons that explain the
plausibility of inculpatory or exculpatory
evidence.
Explaining $\ievidence$ and $\eevidence$ is more complicated already because
we have an unknown number of neighbors in the graph
now. However, the only neighbors can be supporters
(parents) and $\innocence$ (child).
By Definition \ref{def_blaf}, item 4,
their meaning is encoded by the \emph{SE}-constraint.
Assuming that the user did not add additional
constraints about the relationships between 
$\ievidence$, $\eevidence$ and $\innocence$, we 
can again define some simple rules.
If additional constraints on $\ievidence$ and $\eevidence$ are desirable, these rules may need 
to be refined, of course. 
Otherwise, we can distinguish two cases.
If the user asks for an explanation for the lower belief, we can reason as follows:
a non-trivial lower bound ($>0$) can only result from a supporter with non-trivial lower bound. 
So in this case, we can go through the supporters,
collect those supporters that induce the maximum 
lower bound and report them as an explanation.

The user may also ask for an explanation for the upper belief. A non-trivial upper bound ($<1$) can only result
from a non-trivial bound on the belief in $\innocence$.
Let us assume that we want to explain a non-trivial
upper bound on $\ievidence$. From 
the \emph{IE}-constraint in Definition \ref{def_blaf}, item 4, we can see that this must be caused by
a non-trivial lower bound on $\innocence$.
This lower bound, in turn, must be caused by a 
non-trivial lower bound on $\eevidence$ by our 
assumptions.
We could now report the lower bound on $\eevidence$
as an explanation. A more meaningful explanation
would be obtained by also explaining the lower bound on
$\eevidence$. This can be done as explained before
by looking at the supporters of $\eevidence$.
A non-trivial upper bound on $\eevidence$ can be 
explained in a symmetrical manner.

Generating automatic explanations for the remaining 
sub-hypotheses and pieces of evidence is most 
challenging, but can be done as long as we can make
assumptions about the constraints that are involved.
For example, often the \emph{SE}-constraint gives
a natural meaning to support edges and the weighted \emph{Coherence} constraint gives a natural meaning
to attack edges. Intuitively, they cause a lower/upper
bound on the belief in an argument based on their own
lower belief.
If these are the only constraints that are employed,
explanations for lower bounds can again be generated by
collecting the supporters that induce the largest lower bound.
For explaining the upper bound, we now have to consider
two factors. The first factor are attackers with a non-trivial lower bound. The second factor are
other arguments that are supported and have a non-trivial upper bound
(then a too large belief in the supporting argument would cause an inconsistency).
Therefore, we do not only collect the attacking
arguments that induce the largest lower bound, but 
we also collect supported arguments. We can order the
supported arguments by their upper belief multiplied 
by the weight of the support edge.
If the smallest upper bound from the supported arguments
is $U$
and the largest lower bound from the attacking arguments is $L$,
we report the collected supported arguments as an explanation if $1-U > L$, the collected attacking
arguments as an explanation if $1-U < L$ or
both if it happens that $1-U = L$.

For additional constraints, we may have to refine
these rules again. One important constraint that
we discussed is the \emph{CS}-constraint.
In this case, we have to to treat the supporters involved in this constraint differently since
they all contribute to the induced lower bound.
When collecting supporters for explaining lower bounds
(the supporters are parents),
supporting edges that belong to one \emph{CS}-constraint
have to be considered jointly and not independently.
If they induce a lower bound that is larger than all 
lower bounds caused by an \emph{SE}-constraint,
they can be reported collectively as an explanation.
When collecting supporters for explaining upper bounds
(the supporters are children), the reasoning becomes
more complicated because there can be various interactions
between the beliefs in the involved arguments.
We leave an analysis of this case and more general cases for future work.

\section{Related Work}
Our legal reasoning framework allows explicit formalization of uncertainty in legal decision making. 
Other knowledge representation and reasoning formalisms have been applied for 
this purpose. 
Studies of different game-theoretical tools can be found in \cite{prakken1996dialectical,riveret2007success,roth2007strategic}.
\cite{dung2010towards} proposed a probabilistic argumentation framework where the beliefs
of different jurors are represented by individual probability spaces.
Intuitively, the jurors weigh the evidence and decisions can be made based on criteria like
majority voting or belief thresholds.
One particularly popular approach for probabilistic legal reasoning
are Bayesian networks. \cite{fenton2013general} provide a set of idioms used for the construction of Bayesian networks based on legal argument patterns and apply and discuss their framework for a specific case in \cite{fenton2019analyzing}.  \cite{timmer2017two} developed an algorithm to extract argumentative information from a Bayesian network with an intermediate structure, a support graph and analyze their approach in a legal case study. 
 \cite{vlek2016method} propose a method to model different scenarios about crimes with Bayesian networks using scenario scheme idioms and to extract information about the scenario and the quality of the scenario. 
 
 Determining the weights and beliefs for the edges and items of evidence poses a problem for our framework as well as for other symbolic approaches. For some items of evidence the weights as well as the probabilities can be elicited based on statistical analysis and forensic evidence \cite{kwan2011sensitivity,fenton2012risk,zhang2016expert}. 
To test the robustness of Bayesian networks with respect to minor changes in subjective beliefs, \cite{fenton2013general} propose to apply sensitivity analysis on the nodes in question. In our framework, the impact of subjective beliefs can be analysed in a similar manner, by altering the beliefs which are associated with the evidence or the weights associated with the edges. The automated explanation generation outlined in Section \ref{sec:explanation} can then provide information about the influence that differing beliefs have on hypotheses and sub-hypotheses in the framework. With this the perspective of different agents can be modeled, for example the defense and prosecution perspectives.
The clear structure of argumentation frameworks is well suited for generating explanations
automatically and related explanation ideas have been considered recently in \cite{cocarascu2019extracting,vcyras2019argumentation,zeng2018context}, for example.

In Bayesian networks, inconsistency is usually not an issue
because of the way how they are defined. In contrast, 
in our framework, inconsistencies can easily occur.
For example, if a forensic expert judges both the accuracy
of an alibi and the relevance of a direct piece of evidence with $1$,
our constraints become inconsistent. While this may be inconvenient,
this inconsistency is arguably desirable. This is because the modeling
assumptions are inconsistent and this should be recognized and reported
by the system.
If automated merging of the inconsistent beliefs is desirable,
this can be achieved by different tools. One possibility is to
apply inconsistency measures for probabilistic logics in order to evaluate the
severity of conflicts \cite{de2015measuring,potyka2014linear,thimm2013inconsistency}. 
In order to determine the sources of the inconsistency and their impact,
Shapley values can be applied \cite{hunter2010measure}.
Alternatively, we could replace our exact probabilistic reasoning algorithms
with inconsistency-tolerant reasoning approaches that resolve inconsistencies
by minimizing conflicts \cite{adamcik2014collective,muino2011measuring,potyka2015probabilistic} or based on priorities \cite{potyka2015reasoning}. This would be more convenient for
the knowledge engineer, but the resulting meaning of the probabilities
becomes less clear.

\section{Conclusions and Future Work}

We proposed a probabilistic abstract argumentation framework for automated reasoning in law
based on probabilistic epistemic argumentation \cite{HunterT16,HunterPT2018Arxiv}.
Our framework is best suited for merging beliefs in pieces of evidence and sub-hypotheses.
Computing an initial degree of belief for particular pieces of evidence based on
forensic evidence can often be better accomplished by applying Bayesian networks 
or a conventional statistical analysis. Our framework can then be applied on top
in order to merge the different beliefs in pieces of evidence and subhypotheses
in a transparent and explainable way.
In particular, point probabilities are not required, but imprecise probabilities
in the form of belief intervals are supported as well. 

It is also interesting
to note that the worst-case runtime of our framework is polynomial \cite{potyka2019fragment}.
Bayesian networks also have polynomial runtime guarantees in some special cases,
for example, when the Bayesian network structure is a polytree (i.e., it does not contain cycles when ignoring the direction of the edges).
The polynomial runtime  in probabilistic epistemic argumentation is guaranteed by
restricting to a fragment of the full language. This fragment is sufficient for
many cases and is all that we used in this work. However, sometimes it may be
necessary to extend the language. For example, instead of talking only about the 
probabilities of single pieces of evidence and subhypotheses, we may want to talk
about the probabilities of logical combinations. Similarly, one may want to merge
beliefs not only in a linear, but in a non-linear way. Both extensions are difficult
to deal with, in general. However, it seems worthwhile to study such cases in more
detail in order to identify some other tractable special cases.

Another interesting aspect for future work is extending the automated support tools for
designing and querying our legal argumentation frameworks. As explained in Section 5,
the basic framework can be explained well automatically. However, when beliefs are
merged in more complicated ways like by the collective support constraint, a deeper
analysis is required. We will study explanation generation for collective support 
and other interesting merging patterns in more detail in future work. 
For the design of the framework, it may also be helpful to generate explanations
for the sources of inconsistency. As explained in the related work section,
a combination of inconsistency measures for probabilistic logics and Shapley values
seems like a promising approach that we will study. It is also interesting
to apply different approaches for inconsistency-tolerant reasoning in order to
avoid inconsistencies altogether. However, while these approaches usually can
give some meaningful analytical guarantees, it is important to study empirically
if these guarantees 
are sufficient in order to guarantee meaningful results in legal or other applications.

\bibliographystyle{kr}
\bibliography{references}
\end{document}